\newcommand{\z}{{\bf z}}
\newcommand{\x}{{\bf x}}
\newcommand{\X}{{\bf X}}
\newcommand{\w}{{\bf w}}
\renewcommand{\u}{{\bf u}}
\renewcommand{\P}{{\mathcal{P}}}
\newcommand{\E}{\mathbb{E}}
\newcommand{\R}{\mathbb{R}}
\newcommand{\hy}{\hat{y}}
\newcommand{\Lone}{{$L_1$}}
\newcommand{\sign}{{\text{sign}}}
\renewcommand{\P}{{\mathcal{P}}}
\renewcommand{\(}{\left(}
\renewcommand{\)}{\right)}
\newtheorem{theorem}{Theorem}
\newtheorem{lemma}{Lemma}
\title{Accelerated Parallel Optimization Methods for \\ Large Scale Machine Learning}
\author{
Haipeng Luo \\
Princeton University\\
\texttt{haipengl@cs.princeton.edu} \\
\And
Patrick Haffner  \;and\; Jean-Fran\c{c}ois Paiement  \\
AT\&T Labs - Research\\
\texttt{\{haffner,jpaiement\}@research.att.com} \\
}
\begin{document}
\maketitle

\begin{abstract}
The growing amount of high dimensional data in different 
machine learning applications requires more efficient and scalable optimization algorithms.
In this work, we consider combining two techniques,
parallelism and Nesterov's acceleration, 
to design faster algorithms for \Lone-regularized loss.
We first simplify BOOM \cite{MukherjeeCaFrSi13}, a variant of gradient descent,
and study it in a unified framework,
which allows us to not only propose a refined measurement of sparsity to improve BOOM,
but also show that BOOM is provably slower than FISTA \cite{BeckTe09}.
Moving on to parallel coordinate descent methods,
we then propose an efficient accelerated version of Shotgun \cite{BradleyKyBiGu11},
improving the convergence rate from $O(1/t)$ to $O(1/t^2)$.
Our algorithm enjoys a concise form and analysis compared to previous work,
and also allows one to study several connected work in a unified way.
\end{abstract}

\section{Introduction}
Many machine learning problems boil down to optimizing specific objective functions.
In this paper, we consider the following generic optimization problem associated with 
\Lone-regularized loss: 
$$\min_{\w \in \R^d} F(\w) 
= \min_{\w \in \R^d} \sum_{i=1}^n \ell(\x_i^T\w, y_i) + \lambda\|\w\|_1 ,$$
where 
$(\x_i, y_i)_{i=1,\ldots,n}$ 
represent $n$ training examples of the task, 
each with feature vector $\x_i \in \R^d$ and label/response $y_i$,
and $\ell$ is a smooth convex loss function with respect to its first argument.
This objective function is the heart of several important machine learning problems,
including Lasso \cite{Tibshirani96b} where $\ell(\hy, y) = \frac{1}{2}\|\hy-y\|^2$,
and sparse logistic regression \cite{Ng04} where $\ell(\hy, y) = \ln(1+\exp(-y\hy))$.

Much effort has been put into developing optimization methods for this model,
ranging from coordinate minimization \cite{Fu98}, 
randomized coordinate descent \cite{ShalevshwartzTe11},
stochastic gradient descent \cite{BottouBo08, ShalevshwartzSr08},
dual coordinate ascent \cite{ShalevshwartzZh14},
to higher order methods such as interior point methods \cite{KimKoLuBoGo07},
L-BFGS \cite{NocedalWr06}, to name a few.
However, the need for faster and more scalable algorithms is still growing 
due to the emergence of applications that make use of massive amount of 
high dimensional data (e.g. \cite{SvoreBu11}).

One direction to design faster algorithms is to utilize parallel computations
on shared memory multi-processors or on clusters.
Some methods parallelize over examples \cite{LangfordSmZi09, MannMcMoSiWa09, ZinkevichWeLiSm10},
while others parallelize over features \cite{BradleyKyBiGu11, RichtarikTa12}. 
As the references of the latter approach argue, it is sometimes more preferable to 
parallelize over features for \Lone-regularized loss,
which will thus be the focus of this work.

Another direction to design more efficient algorithms is to make use of the curvature of the objective function 
to obtain faster theoretical convergence rate.
It is well known that for smooth objective functions, 
vanilla gradient descent only converges at a suboptimal rate $O(1/t)$,
while Nesterov's acceleration technique would allow the optimal rate $O(1/t^2)$ \cite{Nesterov83}.
Recent progress along this line includes generalizing Nesterov's acceleration
to randomized coordinate descent \cite{Nesterov12, LeeSi13}.
Even if the objective is not smooth,
algorithms that still enjoy the same fast convergence rate have been proposed
for functions that are a sum of a smooth part and a simple separable non-smooth part,
such as the \Lone-regularized loss we consider here 
(see for instance the FISTA algorithm \cite{BeckTe09}).

In this work we aim to combine both of the techniques mentioned above,
that is, to design parallelizable accelerated optimization algorithms.
Similar work includes \cite{MukherjeeCaFrSi13, FercoqRi14}.
We start by revisiting and improving BOOM \cite{MukherjeeCaFrSi13}, 
a parallelizable variant of accelerated gradient descent that tries to utilize the 
sparsity and elliptical geometry of the data.
We first give a simplified form of BOOM which allows one to clearly 
see the the connection between BOOM and FISTA, 
and also to study these algorithms in a unified framework.
Surprisingly, we show that BOOM is actually provably slower than FISTA
when data is normalized, which is an equivalent way of utilizing 
the elliptical geometry.
Moreover, we also propose a refined measurement of sparsity that improves
the one used in BOOM.

Moving on to parallel coordinate descent algorithms,
we then propose an accelerated version of the Shotgun algorithm \cite{BradleyKyBiGu11}.
Shotgun converges as fast as vanilla gradient descent 
while only updating a small subset of coordinates per iteration (in parallel).
Our accelerated version even improves the convergence rate 
from $O(1/t)$ to $O(1/t^2)$,
that is, the same convergence rate as FISTA while updating much fewer coordinates per iteration.
Our algorithm is a unified framework of accelerated single coordinate descent \cite{Nesterov12, LeeSi13},
multiple coordinate descent and full gradient descent.
However, instead of directly generalizing \cite{Nesterov12, LeeSi13} or the very recent work \cite{FercoqRi14},
we take a different route to present Nesterov's acceleration technique so that
our method enjoys a simpler form that makes use of only one auxiliary sequence and
our analysis is also much more concise.
We discuss how these algorithms are connected and 
which one is optimal under different circumstances. 
We finally mention several computational tricks to allow highly efficient implementation of our algorithm.

\section{Gradient Descent: Improving BOOM}
\label{sec:BOOM}
In this section, we investigate algorithms that make use of a full gradient at each round.
Specifically, we revisit, simplify and improve the BOOM algorithm \cite{MukherjeeCaFrSi13}.

There are essentially two forms of Nesterov's acceleration technique, 
one which follows the original presentation of Nesterov and 
uses two auxiliary sequences of points, 
and the other which follows the presentation of FISTA and
uses only one auxiliary sequence. 
BOOM falls into the first category. 
To make the algorithm more clear and the connection to other algorithms more explicit,
we will first translate BOOM into the second form.
Before doing so, to make things even more concise we assume that the data is {\it normalized}.
Specifically, let $\X$ be an $n$ by $d$ matrix such that the $i^{\text{th}}$ row is $\x_i^T$.
We assume each column of $\X$ is normalized such that $\sum_{i=1}^n X_{ij}^2 = 1$ for all $j \in \{1, \ldots, d\}$.
It is clear that this is without loss of generality (see further discussion at the end of this section). 

Now we are ready to present our simplified version of BOOM (see Algorithm \ref{alg:AGD}, Option 2).
Here, $\beta$ is the smoothness parameter of the loss function such that 
its second derivative $\ell''(\hy, y)$ (with respect to the first argument) is 
upper bounded by $\beta$ for all $\hy$ and $y$. 
For instance, $\beta = 1$ for square loss used in Lasso and $\beta=1/4$ for logistic loss.
$\gamma_t$ is the usual coefficient for Nesterov's technique which satisfies:
$\gamma_t = (1 - \theta_t)/\theta_{t+1}$,  
with $\theta_0 = 0, \theta_{t+1} = \frac{1}{2}(1+\sqrt{1+4\theta_t^2}).$
Finally, the shrinkage function $\P$ is defined as $\P_a(\w)_j = \sign(w_j)\max\{|w_j|-a, 0\}$.
One can see that BOOM explicitly makes use of the sparsity of the data, 
and uses $\kappa$, a measurement of sparsity, to scale the gradient.

\begin{figure}[t]
\centering
\SetAlCapSkip{.2em}
\IncMargin{.5em}
\begin{algorithm}[H]
\caption{Parallel Accelerated Gradient Descent}
\label{alg:AGD}

\SetKwInOut{Input}{Input}

\Input{normalized data matrix $\X$, smoothness parameter $\beta$, regularization parameter $\lambda$.}

Set step size $\eta = 1/(c\beta)$, where
\begin{itemize}[leftmargin=20pt]
\item Option 1 (FISTA): $c = \rho$ , where $\rho$ is the spectral radius of $\X^T\X$.
\item Option 2 (BOOM): $c = \kappa$, 
 where $\kappa = \max_i \kappa_i$ and $\kappa_i = |\{j : X_{ij} \neq 0\}|$.
\item Option 3 (Our Method): $c = \bar\kappa$, 
where $\bar\kappa = \max_{j} \sum_{i=1}^n \kappa_i X_{ij}^2 $.
\end{itemize}

Initialize $\w_1 = \u_1.$ \\
\textbf{for} $t = 1, 2, \ldots$ \textbf{do} $\;$(in parallel for each coordinate) \\
$\quad \w_{t+1} = \P_{\lambda\eta} \(\u_{t} - \eta\nabla f(\u_t)\).$ \\
$\quad \u_{t+1} = (1-\gamma_t)\w_{t+1} + \gamma_t\w_t.$ \\
\end{algorithm}
\DecMargin{.5em}
\end{figure}

When written in this form, it is clear that BOOM is very close to FISTA (see Algorithm \ref{alg:AGD}, Option 1).
The only difference is that BOOM replaces $\rho$ with the sparsity $\kappa$.
The questions are, whether this slight modification results in a faster algorithm?
Does BOOM converges faster by utilizing the sparsity of the data?
The following lemma and theorem answer these questions {\it in the negative}.

\begin{lemma}\label{lem:kappa}
Let $\rho, \bar\kappa$ and $\kappa$ be defined as in Algorithm \ref{alg:AGD}. Then we have
$  \rho \leq \bar\kappa  \leq \kappa. $
\end{lemma}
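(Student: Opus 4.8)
The plan is to prove the two inequalities separately, working from right to left. The normalization assumption $\sum_{i=1}^n X_{ij}^2 = 1$ will do all the work for $\bar\kappa \le \kappa$, and a per-row Cauchy--Schwarz estimate will do the work for $\rho \le \bar\kappa$.

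For $\bar\kappa \le \kappa$: since $\kappa_i \le \kappa = \max_{i'}\kappa_{i'}$ for every row $i$, for each fixed column $j$ we have
\[
\sum_{i=1}^n \kappa_i X_{ij}^2 \;\le\; \kappa \sum_{i=1}^n X_{ij}^2 \;=\; \kappa,
\]
the last equality being exactly the column normalization. Taking the maximum over $j$ gives $\bar\kappa \le \kappa$.

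For $\rho \le \bar\kappa$: I would first note that $\X^T\X$ is symmetric positive semidefinite, so its spectral radius equals its largest eigenvalue, which equals the squared operator norm of $\X$; hence $\rho = \max_{\|v\|_2 = 1}\|\X v\|_2^2 = \max_{\|v\|_2 = 1}\sum_{i=1}^n (\x_i^T v)^2$. Fix a unit vector $v$ and, for each $i$, let $S_i = \{j : X_{ij} \neq 0\}$, so $|S_i| = \kappa_i$. Applying Cauchy--Schwarz only over the support $S_i$,
\[
(\x_i^T v)^2 = \Big(\sum_{j \in S_i} X_{ij} v_j\Big)^2 \;\le\; |S_i| \sum_{j \in S_i} X_{ij}^2 v_j^2 \;=\; \kappa_i \sum_{j=1}^d X_{ij}^2 v_j^2,
\]
where the last step uses $X_{ij} = 0$ for $j \notin S_i$. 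Summing over $i$ and exchanging the order of summation,
\[
\sum_{i=1}^n (\x_i^T v)^2 \;\le\; \sum_{j=1}^d v_j^2 \Big(\sum_{i=1}^n \kappa_i X_{ij}^2\Big) \;\le\; \bar\kappa \sum_{j=1}^d v_j^2 \;=\; \bar\kappa.
\]
Since $v$ ranges over all unit vectors, $\rho \le \bar\kappa$, completing the proof.

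The computation is short, and I do not expect a real obstacle; the only points that need care are (i) restricting the Cauchy--Schwarz step to the support $S_i$ so that the multiplicative factor is $\kappa_i$ rather than the ambient dimension $d$, and (ii) justifying $\rho = \max_{\|v\|_2=1}\|\X v\|_2^2$, which is valid precisely because $\X^T\X \succeq 0$ so its spectral radius coincides with its top eigenvalue.
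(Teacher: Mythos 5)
Your proof is correct and follows essentially the same route as the paper's: the key step in both is the per-row Cauchy--Schwarz inequality restricted to the support of $\x_i$, followed by swapping the order of summation and invoking the column normalization. The only cosmetic difference is that you use the variational characterization $\rho = \max_{\|v\|_2=1}\|\X v\|_2^2$ while the paper evaluates the same quadratic form at a specific unit eigenvector of $\X^T\X$.
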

\begin{proof}
Let $\z$ be a unit eigenvector of $\X^T\X$ with respect to the eigenvalue $\rho$. One has
$ \rho =  \z^T (\rho\z)  =  \z^T (\X^T\X\z) = \|\X\z\|_2^2 
= \sum_{i=1}^n \(\sum_{j: X_{ij}\neq 0} X_{ij}z_j \)^2. $
By Cauchy-Schwarz inequality, we continue
$$ \rho \leq \sum_{i=1}^n \( \sum_{j:X_{ij}\neq 0} 1^2 \sum_{j:X_{ij}\neq 0} X_{ij}^2z_j^2 \) 
= \sum_{i=1}^n \(\kappa_i \sum_{j=1}^d X_{ij}^2z_j^2  \) 
= \sum_{j=1}^d \(z_j^2 \sum_{i=1}^n \kappa_i X_{ij}^2\)  \leq \bar\kappa \leq \kappa, $$
where the last two equalities make use of the fact that 
$\sum_j z_j^2 = 1$ and $\sum_i X_{ij}^2 = 1$ respectively.
\end{proof}

\begin{theorem}\label{thm:AGD}
Suppose $F(\w)$ admits a minimizer $\w^*$. 
As long as $c \geq \rho$, Algorithm \ref{alg:AGD} insures
$ F(\w_t) - F(\w^*) \leq 2c\beta \|\w_1-\w^*\|_2/t^2 $ after $t$ iterations.
In other words, to reach $\epsilon$ accuracy, Algorithm \ref{alg:AGD} 
needs $T_\epsilon = O(\sqrt{c\beta/\epsilon})$ iterations.
\end{theorem}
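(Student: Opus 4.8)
The plan is to observe that Algorithm~\ref{alg:AGD} is exactly FISTA applied to the composite objective $F = f + \lambda\|\cdot\|_1$, where $f(\w) = \sum_{i=1}^n \ell(\x_i^T\w, y_i)$ is the smooth part, run with the (possibly conservative) step size $\eta = 1/(c\beta)$, and then to push the choice $c \geq \rho$ through the standard accelerated proximal-gradient analysis. The ``in parallel for each coordinate'' remark plays no role in the bound: the proximal map of $\lambda\|\cdot\|_1$ is the coordinatewise soft-threshold $\P_{\lambda\eta}$ and the gradient step is a plain vector update, so the parallel implementation computes precisely the iterates $\w_{t+1},\u_{t+1}$ written in the algorithm. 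The first substantive step is to certify that $f$ has $c\beta$-Lipschitz gradient: since $\ell''(\cdot,y)\leq\beta$ for every $y$, the Hessian satisfies $\nabla^2 f(\w) = \sum_{i=1}^n \ell''(\x_i^T\w,y_i)\,\x_i\x_i^T \preceq \beta\,\X^T\X \preceq \beta\rho\,\I \preceq c\beta\,\I$, the last step using $c\geq\rho$. Writing $L := c\beta = 1/\eta$, this yields the descent inequality $f(\mathbf{b}) \leq f(\mathbf{a}) + \nabla f(\mathbf{a})^T(\mathbf{b}-\mathbf{a}) + \tfrac{L}{2}\|\mathbf{b}-\mathbf{a}\|_2^2$ for all $\mathbf{a},\mathbf{b}\in\R^d$.

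The core estimate is a one-step progress lemma. The update $\w_{t+1} = \P_{\lambda\eta}(\u_t - \eta\nabla f(\u_t))$ is precisely the minimizer of $\w \mapsto \nabla f(\u_t)^T\w + \tfrac{1}{2\eta}\|\w - \u_t\|_2^2 + \lambda\|\w\|_1$, so its first-order optimality condition produces a subgradient $\mathbf{s}\in\partial(\lambda\|\cdot\|_1)(\w_{t+1})$ with $\mathbf{s} = \tfrac1\eta(\u_t - \w_{t+1}) - \nabla f(\u_t)$. Combining this with the descent inequality evaluated at $(\mathbf{a},\mathbf{b})=(\u_t,\w_{t+1})$, the convexity of $f$ at $\u_t$, and the convexity of $\lambda\|\cdot\|_1$ at $\w_{t+1}$ (each tested against an arbitrary point $\w$), and then expanding the square, one obtains for every $\w\in\R^d$
$$ F(\w_{t+1}) \;\leq\; F(\w) + \tfrac{L}{2}\|\u_t - \w\|_2^2 - \tfrac{L}{2}\|\w_{t+1} - \w\|_2^2 . $$

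I would then run the usual $\theta_t$-weighted telescoping. Apply the progress lemma once with $\w=\w^*$ and once with $\w=\w_t$, take the convex combination with weights $1 - 1/\theta_{t+1}$ and $1/\theta_{t+1}$ --- this is exactly why $\u_{t+1} = (1-\gamma_t)\w_{t+1} + \gamma_t\w_t$ with $\gamma_t = (1-\theta_t)/\theta_{t+1}$ is the right interpolation --- and use the identity $\theta_t^2 = \theta_{t+1}^2 - \theta_{t+1}$ that the recursion $\theta_{t+1} = \tfrac12(1+\sqrt{1+4\theta_t^2})$ is designed to satisfy. Completing the square shows that the potential
$$ \Phi_t \;:=\; \theta_t^2\big(F(\w_{t+1}) - F(\w^*)\big) + \tfrac{L}{2}\big\|\theta_t\w_{t+1} - (\theta_t-1)\w_t - \w^*\big\|_2^2 $$
is non-increasing in $t$. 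Since $\theta_0 = 0$ and $\w_1 = \u_1$ (with the convention $\w_0 := \w_1$), tracking the chain back to $t=0$ gives $\Phi_0 = \tfrac{L}{2}\|\w_1-\w^*\|_2^2$, hence $\theta_{t-1}^2\big(F(\w_t) - F(\w^*)\big) \leq \tfrac{L}{2}\|\w_1-\w^*\|_2^2$, and a one-line induction from $\theta_1 = 1$ and $\theta_{t+1}\geq\theta_t+\tfrac12$ yields $\theta_{t-1}\geq t/2$ for $t\geq 2$. Therefore $F(\w_t) - F(\w^*) \leq 2L\|\w_1-\w^*\|_2^2/t^2 = 2c\beta\|\w_1-\w^*\|_2^2/t^2$, and solving $2c\beta\|\w_1-\w^*\|_2^2/t^2 \leq \epsilon$ gives $T_\epsilon = O(\sqrt{c\beta/\epsilon})$.

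The step I expect to be the main obstacle is the telescoping bookkeeping in the third paragraph: one must verify that the cross terms generated by the two applications of the progress lemma reassemble into the single perfect square appearing in $\Phi_t$ at the next index, which is precisely where the definitions of $\gamma_t$ and of the $\theta_t$-recursion are consumed, and one must handle the degenerate start $\theta_0 = 0$ correctly. By contrast, the smoothness certification (which is where $c\geq\rho$, and hence Lemma~\ref{lem:kappa}, enters) and the prox-optimality manipulation are routine.
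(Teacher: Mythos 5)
Your proposal is correct and is exactly the argument the paper intends: the paper omits the proof precisely because it is ``a direct generalization of the original proof for FISTA,'' and your plan follows that generalization step for step, with the smoothness certification $\nabla^2 f(\w) \preceq \beta\,\X^T\X \preceq c\beta\,\I$ being the only non-boilerplate ingredient and correctly isolating where the hypothesis $c \geq \rho$ enters. (One minor point: your bound $2c\beta\|\w_1-\w^*\|_2^2/t^2$ with the squared norm is the correct one; the exponent appears to have been dropped in the theorem statement as printed.)
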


We omit the proof of Theorem \ref{thm:AGD} since it is a direct generalization of the original proof for FISTA.
Together with Lemma \ref{lem:kappa}, this theorem suggests that we should always choose FISTA over BOOM
if we know $\rho$, since it uses a larger step size and converges faster.
In practice, computing $\rho$ might be time consuming.
However, a useful side product of Lemma \ref{lem:kappa} is that 
it proposes a refined measurement of sparsity $\bar\kappa$, 
which is clearly also easy to compute at the same time.
We include this improved variant in Option 3 of Algorithm \ref{alg:AGD}.
Note that for any fixed $j$, $X_{1j}^2, \ldots, X_{nj}^2$ forms a distribution,
and thus $\bar\kappa$ should be interpreted as the maximum weighted
average sparsity of the data.

\textbf{Generality of Feature Normalizing.}
One might doubt that our results hold merely because of the normalization assumption of the data.
Indeed, authors of \cite{MukherjeeCaFrSi13} emphasize that one of the advantages of BOOM
is that it utilizes the elliptical geometry of the feature space,  
which does not exist any more if each feature is normalized.
However, one can easily verify that the outputs of the following two methods
are completely identical: 
1) apply BOOM directly on the original data;
2) scale each feature first so that the data is normalized, apply BOOM, and at the end
scale back each coordinate of the output accordingly.
Therefore, feature normalization does not affect the behavior of BOOM at all.
Put it differently, feature normalization is an equivalent way to utilize the elliptical geometry of the data.
Note that the same argument does not hold for FISTA.
Indeed, while our results show that FISTA is provably faster than BOOM,
experiments in \cite{MukherjeeCaFrSi13} show the opposite on unnormalized data.
This suggests that we should always normalize the data before applying FISTA.

\section{Coordinate Descent: Accelerated Shotgun}
Updating all coordinates in parallel is not realistic, 
either because of a limited number of cores in multi-processors or 
communication bottlenecks in clusters. 
Therefore in this section, we shift gears and consider algorithms that 
only update a subset of the coordinates at each iterations.
To simplify presentation, we assume there is no regularization (i.e. $\lambda=0$)
and again data is normalized.
We propose a generalized and accelerated version of 
the Shotgun algorithm\footnote{Note that Shotgun has already been successfully accelerated 
with Coordinate Descent Newton method \cite{BradleyKyBiGu11},  
but no analysis of the convergence rate is provided for this heuristic method.}
\cite{BradleyKyBiGu11} (see Algorithm \ref{alg:AShotgun}). 

\begin{figure}[t]
\centering
\SetAlCapSkip{.2em}
\IncMargin{.5em}
\begin{algorithm}[H]
\caption{Accelerated Shotgun}\label{alg:AShotgun}

\SetKwInOut{Input}{Input}

\Input{number of parallel updates $P$, step size coefficient $\eta$, smoothness parameter $\beta$}

Initialize $\w_1 = \u_1$. \\
\For{ $t = 1,2,\ldots,$} {
pick a random subset $S_t$ of $\{1,\ldots, d\}$ such that $|S_t| = P$.

\For{ $j = 1$ \KwTo $d$} {
$w_{t+1, j} = 
\begin{cases}
u_{t, j} - \frac{\eta}{\beta} \nabla f(\u_t)_j  &\text{if $j \in S_t;$}\\
u_{t, j} &\text{else.}
\end{cases} $
}
$\u_{t+1} = (1 - \gamma_t) \w_{t+1} + \gamma_t \w_{t} + c_t (\u_{t}-\w_{t+1})$.
\label{eq:def_u}
}
\end{algorithm}
\DecMargin{.5em}
\end{figure}

What Shotgun does is to perform coordinate descent on several randomly selected coordinates (in parallel),
with the {\it same step size} as in the usual (single) coordinate descent.
If the number of updates per iteration $P$ is well tuned, 
Shotgun converges as fast as doing a full gradient descent, 
even if it updates much fewer coordinates.
The main difference between shotgun and our accelerated version is that 
as usual accelerated methods, our algorithm maintains an auxiliary sequence $\u_t$
at which we compute gradients.
However, $\u_{t+1}$ is not just $(1-\gamma_t)\w_{t+1} + \gamma_t\w_t$
as in Algorithm \ref{alg:AGD}.
Instead, a small {\it step back} has to be taken, which is reflected in the extra
term $c_t (\u_{t}-\w_{t+1})$ where constant $c_t$ will be specified later in Theorem \ref{thm:AShotgun}.
Intuitively, this step back is to reduce the momentum due to the fact that we are not updating all coordinates.
Another important generalization of Shotgun is that we introduce an extra step size coefficient $\eta$,
which allows us to unify several algorithms (see further discussion below).
Note that $\eta$ is fixed to $1$ in the original Shotgun.
We now state the convergence rate of our Algorithm in the following theorem.

\begin{theorem}\label{thm:AShotgun}
Suppose $F(\w)$ admits a minimizer $\w^*$. 
If $P$ and $\eta > 0$ are such that $\frac{\eta}{2}(1+\sigma) < 1$ where
$\sigma = \frac{(P-1)(\rho-1)}{d-1}$ (recall $\rho$ is the spectral radius of $\X^T\X$), 
constant $\gamma_t$ is defined as in Section \ref{sec:BOOM}, 
and 
$c_t = \frac{\theta_t}{\theta_{t+1}} \(1 - \frac{2P}{d}\(1-\frac{\eta}{2}(1+\sigma)\) \),$
then the following holds for any $t > 1$,
\begin{equation}\label{eq:rate}
\E_{S_{1:t-1}}[F(\w_t)] - F(\w^*) \leq 
\frac{\beta d^2 \|\w_1-\w^*\|^2}{t^2P^2\eta\(1-\frac{\eta}{2}(1+\sigma)\)}, 
\end{equation}
where the expectation is with respect to the random choices of  subsets $S_1, \ldots, S_{t-1}$. 
\end{theorem}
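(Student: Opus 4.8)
The plan is a FISTA-style potential-function (Lyapunov) argument, adapted in two ways that are forced by the random parallel updates: (i) FISTA's deterministic descent step is replaced by an \emph{expected} quadratic overapproximation that accounts for the cross-terms among the $P$ coordinates updated at once (this is where $\sigma$ enters), and (ii) the momentum recursion is retuned --- exactly the role of the extra term $c_t(\u_t-\w_{t+1})$ --- so that the telescoping of the potential still closes after expectations are taken. Throughout, $F=f$ because $\lambda=0$.

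\emph{Step 1 (expected descent lemma).} Write $\nabla^2 f(\w)=\X^T D\X$ with $D$ the diagonal matrix of entries $\ell''(\x_i^T\w,y_i)\in[0,\beta]$, so $\nabla^2 f(\w)\preceq\beta\X^T\X$ and $f(\u_t+\bxi)\le f(\u_t)+\langle\nabla f(\u_t),\bxi\rangle+\tfrac{\beta}{2}\bxi^T\X^T\X\bxi$ for all $\bxi\in\R^d$. Apply this with $\bxi$ equal to the restriction of $-\tfrac{\eta}{\beta}\nabla f(\u_t)$ to $S_t$ (so $\u_t+\bxi=\w_{t+1}$) and take $\E_{S_t}$. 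Using $\sum_i X_{ij}^2=1$, $\Pr[j\in S_t]=P/d$ and $\Pr[\{j,k\}\subseteq S_t]=\tfrac{P(P-1)}{d(d-1)}$ for $j\ne k$, one gets $\E_{S_t}[\bxi^T\X^T\X\bxi]\le\tfrac{P}{d}(1+\sigma)\,\tfrac{\eta^2}{\beta^2}\|\nabla f(\u_t)\|^2$, hence $\E_{S_t}[F(\w_{t+1})]\le f(\u_t)-\tfrac{P}{d}\tfrac{\eta}{\beta}(1-\tfrac{\eta}{2}(1+\sigma))\|\nabla f(\u_t)\|^2$. Combining with convexity $f(\u_t)\le F(\z)+\langle\nabla f(\u_t),\u_t-\z\rangle$ for arbitrary $\z\in\R^d$, and eliminating $\nabla f(\u_t)$ via $\E_{S_t}[\u_t-\w_{t+1}]=\tfrac{P}{d}\tfrac{\eta}{\beta}\nabla f(\u_t)$ and $\E_{S_t}[\|\u_t-\w_{t+1}\|^2]=\tfrac{P}{d}\tfrac{\eta^2}{\beta^2}\|\nabla f(\u_t)\|^2$, gives
$$\E_{S_t}[F(\w_{t+1})]\le F(\z)+A\,\E_{S_t}\big[\langle\u_t-\w_{t+1},\,\u_t-\z\rangle\big]-B\,\E_{S_t}\big[\|\u_t-\w_{t+1}\|^2\big],$$
with $A=\tfrac{d\beta}{P\eta}$, $B=\tfrac{\beta}{\eta}(1-\tfrac{\eta}{2}(1+\sigma))$, so that $B/A=r:=\tfrac{P}{d}(1-\tfrac{\eta}{2}(1+\sigma))\in(0,1)$.

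\emph{Step 2 (auxiliary sequence, the role of $c_t$, telescoping).} Introduce the analysis-only sequence $\mathbf{v}_t:=\theta_t\u_t-(\theta_t-1)\w_t$, so $\mathbf{v}_1=\w_1$. Expanding the update of $\u_{t+1}$ and using $\gamma_t\theta_{t+1}=1-\theta_t$ together with the prescribed $c_t$ (note $\tfrac{2P}{d}(1-\tfrac{\eta}{2}(1+\sigma))=2r$, so $\theta_{t+1}c_t=\theta_t(1-2r)$) yields the clean identity $\mathbf{v}_{t+1}-\mathbf{v}_t=(\theta_t-\theta_{t+1}c_t)(\w_{t+1}-\u_t)=2r\theta_t(\w_{t+1}-\u_t)$. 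Now take the convex combination of the Step-1 inequality at $\z=\w_t$ and at $\z=\w^*$ with weights $1-\tfrac{1}{\theta_t}$ and $\tfrac{1}{\theta_t}$; since $\tfrac{1}{\theta_t}(\mathbf{v}_t-\w^*)=\u_t-(1-\tfrac{1}{\theta_t})\w_t-\tfrac{1}{\theta_t}\w^*$, subtract $F(\w^*)$, multiply by $\theta_t^2$ (using $\theta_t^2-\theta_t=\theta_{t-1}^2$), substitute $\u_t-\w_{t+1}=\tfrac{1}{2r\theta_t}(\mathbf{v}_t-\mathbf{v}_{t+1})$ and apply the polarization identity inside $\E_{S_t}$. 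The squared-displacement term $\tfrac{A}{4r}\E_{S_t}[\|\mathbf{v}_{t+1}-\mathbf{v}_t\|^2]$ then equals $B\theta_t^2\E_{S_t}[\|\u_t-\w_{t+1}\|^2]$ (because $\|\mathbf{v}_{t+1}-\mathbf{v}_t\|^2=4r^2\theta_t^2\|\w_{t+1}-\u_t\|^2$ and $Ar=B$) and cancels the corresponding term from Step 1, leaving
$$\theta_t^2\,\E_{S_t}\big[F(\w_{t+1})-F(\w^*)\big]+\tfrac{A}{4r}\,\E_{S_t}\big[\|\mathbf{v}_{t+1}-\w^*\|^2\big]\le\theta_{t-1}^2\big(F(\w_t)-F(\w^*)\big)+\tfrac{A}{4r}\|\mathbf{v}_t-\w^*\|^2.$$
Taking total expectation over $S_1,\ldots,S_{t-1}$ and telescoping this potential from $1$ to $t-1$ via the tower rule, using $\theta_0=0$ and $\mathbf{v}_1=\w_1$, gives $\theta_{t-1}^2\,\E_{S_{1:t-1}}[F(\w_t)-F(\w^*)]\le\tfrac{A}{4r}\|\w_1-\w^*\|^2=\tfrac{\beta d^2}{4P^2\eta(1-\frac{\eta}{2}(1+\sigma))}\|\w_1-\w^*\|^2$; since $\theta_{t-1}\ge t/2$ for $t>1$ (from the induction $\theta_{s+1}=\tfrac12(1+\sqrt{1+4\theta_s^2})\ge\tfrac12+\theta_s$), this is exactly \eqref{eq:rate}.

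The main obstacle is Step 2: with plain FISTA momentum ($c_t=0$) the randomized partial update makes $\E_{S_t}[\|\u_t-\w_{t+1}\|^2]$ scale as $P/d$ rather than $(P/d)^2$, so the linear and quadratic contributions in Step 1 no longer balance and the $O(1/t^2)$ rate is lost; the real work is to pin down the constant inside $c_t$ so that the cancellation above is \emph{exact}, not merely an inequality. A secondary point is that, because $\w_{t+1}$ and $\mathbf{v}_{t+1}$ are random, every vector identity must be applied before $\E_{S_t}$ is taken, and the final telescoping then goes through the tower rule over $S_1,\ldots,S_{t-1}$.
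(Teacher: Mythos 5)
Your proposal is correct and follows essentially the same route as the paper's proof: the same expected descent bound with $\sigma$ (which the paper imports as Lemma 3.3 of Shotgun and you re-derive from the Hessian bound and inclusion probabilities), the same constants ($A=-2ab$, $B=-a$, $b=1/2r$ in the paper's notation), the same auxiliary point $\theta_t\u_t-(\theta_t-1)\w_t$, and the same use of $c_t$ to make its increment proportional to $\w_{t+1}-\u_t$ so the quadratic terms cancel exactly before telescoping. The only differences are presentational (an explicit Lyapunov inequality and polarization identity in place of the paper's completion of the square via the substitution $\w_{k+1}-\u_k=b(\w'-\u_k)$).
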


\begin{proof}
For any iteration $k$,  we first consider the expectation of $F(\w_{k+1})$
conditioning on $S_1, \ldots, S_{k-1}$, which we denote by $\E_k[F(\w_{k+1})]$.
Staring from Lemma 3.3 in \cite{BradleyKyBiGu11}, we have
\begin{align*}
\E_k[F(\w_{k+1}) - F(\u_k)] 
&\leq \frac{P}{d}\sum_{j=1}^d \( -\frac{\eta}{\beta} \nabla F(\u_k)_j^2
+ \frac{\beta}{2}(1+\sigma) (\frac{\eta}{\beta} \nabla F(\u_k)_j )^2 \) \\
&= -\frac{P\eta}{d\beta}\(1 - \frac{\eta}{2}(1+\sigma)\) \|\nabla F(\u_k)\|^2.
\end{align*}
Let $\w \in \R^d$ be an arbitrary point, by the above inequality and the convexity of $F$ we have
\begin{align*}
\E_k[F(\w_{k+1}) - F(\w)] 
&= \E_k[F(\w_{k+1}) - F(\u_k) + F(\u_k) - F(\w)] \\
&\leq -\frac{P\eta}{d\beta}\(1 - \frac{\eta}{2}(1+\sigma)\) \|\nabla F(\u_k)\|^2
+ \nabla F(\u_k)^T(\u_k - \w).
\end{align*}
Also, direct calculations show
$$  \E_k[\w_{k+1}-\u_k] = -\frac{P\eta}{d\beta}\nabla F(\u_k), \quad
  \E_k\left[\|\w_{k+1}-\u_k\|^2\right] = \frac{P\eta^2}{d\beta^2} \|\nabla F(\u_k)\|^2, $$
which with $a = -\frac{\beta}{\eta}\(1 - \frac{\eta}{2}(1+\sigma)\) $ and 
$b = \frac{d}{2P\(1-\frac{\eta}{2}(1+\sigma)\)}$ leads to 
$$
\E_k[F(\w_{k+1}) - F(\w)]  \leq a \E_k \left[ 
\|\w_{k+1}-\u_k\|^2 + 2b(\w_{k+1}-\u_k)^T(\u_k-\w)  \right].
$$

By the law of total expectation, taking the expectation with respect to 
$S_1, \ldots, S_{k-1}$ on both sides and plugging $\w=\w_k$ and $\w=\w^*$ 
respectively gives
\begin{equation}\label{eq:keyeq1} 
\delta_{k+1} - \delta_k \leq a \E_{S_{1:k}} \left[ 
\|\w_{k+1}-\u_k\|^2 + 2b(\w_{k+1}-\u_k)^T(\u_k-\w_k) \right], 
\end{equation}
and
\begin{equation}\label{eq:keyeq2} 
\delta_{k+1}  \leq a \E_{S_{1:k}} \left[ 
\|\w_{k+1}-\u_k\|^2 + 2b(\w_{k+1}-\u_k)^T(\u_k-\w^*) \right],
\end{equation}
where we define $\delta_k  = \E_{S_{1:k-1}}F(\w_k) - F(\w^*)$.
Since $\theta_k \geq 1$ for any $k > 0$, 
we now multiply both sides of Eq. \eqref{eq:keyeq1} by $\theta_k-1$, 
and add the result to Eq. \eqref{eq:keyeq2}, arriving at
$$ \theta_k \delta_{k+1} - (\theta_k-1)\delta_k 
\leq a\E_{S_{1:k}} \left[  \theta_k \|\w_{k+1}-\u_k\|^2  +
2b(\w_{k+1}-\u_k)^T(\theta_k\u_k- (\theta_k-1)\w_k - \w^*) \right]. $$
Multiplying both sides by $\theta_k$ and using the 
fact that $\theta_{k-1}^2 = \theta_k^2 - \theta_k$, we obtain
$$ \theta_k^2 \delta_{k+1} - \theta_{k-1}^2 \delta_k 
\leq a\E_{S_{1:k}} \left[   \|\theta_k(\w_{k+1}-\u_k)\|^2  +
2b\theta_k(\w_{k+1}-\u_k)^T(\theta_k\u_k- (\theta_k-1)\w_k - \w^*) \right]. $$
Let $\w'$ be such that 
\begin{equation}\label{eq:trick}
\w_{k+1} - \u_k = b (\w' - \u_k).
\end{equation}
We continue with
\begin{align*}
 \theta_k^2 \delta_{k+1} - \theta_{k-1}^2 \delta_k 
&\leq  ab^2 \E_{S_{1:k}} \left[   \|\theta_k(\w' -\u_k)\|^2  +
2\theta_k(\w'-\u_k)^T(\theta_k\u_k- (\theta_k-1)\w_k - \w^*) \right] \\
&=  ab^2  \E_{S_{1:k}} \left[  \|\theta_k\w' - (\theta_k-1)\w_k - \w^* \|^2
- \| \theta_k \u_k - (\theta_k-1)\w_k  - \w^* \|^2
\right].
\end{align*}
Note that 
\begin{align*}
&\theta_k \w' - (\theta_k-1)\w_k  \\
=\;& \frac{\theta_k}{b}\w_{k+1} + \theta_k(1-\frac{1}{b})\u_k + (1-\theta_k)\w_k 
\tag{by Eq. \eqref{eq:trick}}\\
=\;& (\theta_{k+1}-1+\theta_k)\w_{k+1} + (1-\theta_k)\w_k +
\theta_k(1-\frac{1}{b})(\u_k-\w_{k+1}) - (\theta_{k+1}-1)\w_{k+1} \\
=\;& \theta_{k+1}\( (1-\gamma_k)\w_{k+1} + \gamma_k\w_k 
+ c_k(\u_k - \w_{k+1}) \) - (\theta_{k+1}-1)\w_{k+1} 
\tag{by the definition of $\gamma_k$ and $c_k$}\\
=\;&  \theta_{k+1} \u_{k+1} - (\theta_{k+1}-1)\w_{k+1},
\tag{by the definition of $\u_{k+1}$}.
\end{align*}
So we finally arrive at 
$$ \theta_k^2 \delta_{k+1} - \theta_{k-1}^2 \delta_k
\leq ab^2 ( v_{k+1} - v_{k}), $$
where $v_k = \E_{S_{1:k-1}} 
\left[\| \theta_k \u_k - (\theta_k-1)\w_k  - \w^* \|^2\right]$. 
Summing these inequalities from $k=1$ to $k= t-1$, and noting that $a < 0$
by the assumption $\frac{\eta}{2}(1+\sigma) < 1$, we have
$$  \theta_{t-1}^2\delta_t \leq -ab^2v_1
=  \frac{\beta d^2\|\w_1-\w^*\|^2}{4P^2\eta\(1-\frac{\eta}{2}(1+\sigma)\)}.   $$
By induction one can verify that $\theta_{t-1} \geq t/2$ for any $t>1$, 
and thus Eq. \eqref{eq:rate} follows.
\end{proof}

We now explain how to interpret this result
and how to choose parameters $P$ and $\eta$.
First, note that to reach $\epsilon$ accuracy (i.e. $\E[F(\w_t)] - F(\w^*) \leq \epsilon$),
Algorithm \ref{alg:AShotgun} requires 
$T_\epsilon = O\(\frac{d}{P}\sqrt{\beta/\(\epsilon \eta\(1 - \frac{\eta}{2}(1+\sigma)\)\)}\)$ iterations.
For any fixed $P$, 
one can verify that $\eta^* = 1/(1+\sigma)$ is the optimal choice for $\eta$ to minimize $T_\epsilon$.
Plugging $\eta^*$ back in $T_\epsilon$ and minimizing over $P$, 
one can check that $P=d$ is the best choice. 
In this case, since $\sigma = \rho-1, \eta^*=1/\rho$ and $c_t = 0$,
Algorithm \ref{alg:AShotgun} actually degenerates to FISTA and $T_\epsilon = O(\sqrt{\rho\beta/\epsilon})$,
recovering the results in Theorem \ref{thm:AGD} exactly.

Of course, at the end it is not the number of iterations, 
but the total computational complexity that we care about most.
Suppose we implement the algorithm without using any parallel computation.
Then as we will discuss later, the time complexity for each iteration would be $O(nP)$ 
(recall $n$ is the number of examples),
and the total complexity $O(T_\epsilon nP)$ is minimized when $\eta = 1$ and $P = 1$, 
leading to $O(nd\sqrt{\beta/\epsilon})$.
In this case, our algorithm degenerates to an accelerated version of randomized (single) coordinate descent.
Note that this essentially recovers the algorithms and results in \cite{Nesterov12, LeeSi13},
but in a much simpler form (and analysis).

Finally, we consider implementing the algorithm using parallel computation.
Note that $\eta^* = 1/(1+\sigma)$ is always at most 1.
So we fix $\eta$ to be $1$ (as in the original Shotgun algorithm),
leading to the largest possible step size $1/\beta$ and a potentially small $P$. 
Ignoring for now hardware or communication limits,
we assume in this case updating $P$ coordinates in parallel costs approximately the same time
no matter what $P$ is. 
In other words, we are again only interested in minimizing $T_\epsilon$.
One can verify that the optimal choice here for $P$ is 
$\frac{2}{3}(\frac{d-1}{\rho-1}+1)$ and $T_\epsilon = O(\rho\sqrt{\beta/\epsilon})$.
This improves the convergence rate from $O(1/\epsilon)$ to $O(\sqrt{1/\epsilon})$ 
compared to Shotgun,
and is almost 
as good as FISTA (with much less computation per iteration).

\textbf{Efficient Implementation.}
We discuss how to efficiently implement Algorithm \ref{alg:AShotgun}. 
1) At first glance it seems that computing vector $\u$ needs to go over all $d$ coordinates.
However, one can easily generalize the trick introduced in \cite{LeeSi13} to update only $P$ coordinates
and compute $\w$ and $\u$ implicitly. 
2) Another widely used trick is to maintain inner products between examples and weight vectors
(i.e. $\X\w$ and $\X\u$), so that computing a single coordinate of the gradient can be done in $O(n)$, 
or even faster in the case of sparse data.
3) Instead of choosing $S_t$ uniformly at random, 
one can also do the following: 
arbitrarily divide the set $\{1,\ldots,d\}$ into $P$ disjoint subsets of equal size in advanced
(assuming $d$ is a multiple of $P$ for simplicity), 
then at each iteration, 
select one and only one element from each of the $P$ subsets uniformly at random to form $S_t$.
This would only lead to a minor change of the convergence results 
(indeed, one only needs to redefine $\sigma$ to be $(\rho-1)P/d$).
The advantage of this approach is that it suggests that we can separate the
data matrix $\X$ by columns and store these subsets on $P$ machines separately
to naturally allow parallel updates at each iteration.

\newpage
\bibliographystyle{plain}
{\bibliography{../../references/ref.bib}}


\end{document}